\acrodef{CF}{collaborative filtering}
\acrodef{LTR}{learning to rank}
\acrodef{NDCG}{normalized discounted cumulative gain}
\acrodef{DCG}{discounted cumulative gain}
\acrodef{VAE}{variational autoencoder}
\acrodef{VAE}{variational autoencoder}
\acrodef{ELBO}{evidence lower bound objective}
\acrodef{IPS}{inverse propensity scoring}
\acrodef{BPR}{bayesian personalized ranking}
\acrodef{MF}{matrix factorization}
\acrodef{MNAR}{missing-not-at-random}
\acrodef{ULTR}{unbiased learning-to-rank}
\acrodef{CLTR}{counterfactual learning to rank}
\acrodef{LOLN}{law of large numbers}
\acrodef{CRM}{counterfactual risk minimization}
\acrodef{IS}{importance sampling}
\acrodef{i.i.d}{independent and identically distributed}
\acrodef{CRM}{counterfactual risk minimization}
\acrodef{PL}{Plackett-Luce}
\acrodef{CTR}{click through rate}
\acrodef{SEA}{safe exploration algorithm}
\acrodef{GENSPEC}{generalization and specialization }
\acrodef{VCRM}{variational counterfactual risk minimization}
\acrodef{SGD}{stochastic gradient descent}
\acrodef{DR}{doubly robust}
\acrodef{DM}{direct method}
\acrodef{ERC}{exposure ratio clipping}
\acrodef{PRPO}{proximal ranking policy optimization}
\acrodef{PPO}{proximal policy optimization}
\acrodef{RL}{reinforcement learning}
\theoremstyle{plain}
\newtheorem{theorem}{Theorem}[section]
\theoremstyle{definition}
\theoremstyle{remark}
\newcommand{\headernodot}[1]{\vspace{1mm}\noindent\textbf{#1}}
\newcommand{\header}[1]{\headernodot{#1.}}
\author{Shashank Gupta}
\affiliation{%
  \institution{University of Amsterdam}
  \city{Amsterdam}
  \country{The Netherlands}
}
\email{s.gupta2@uva.nl}
\author{Harrie Oosterhuis}
\affiliation{%
  \institution{Radboud University}
  \city{Nijmegen}
  \country{The Netherlands}
}
\email{harrie.oosterhuis@ru.nl}
\author{Maarten de Rijke}
\affiliation{%
  \institution{University of Amsterdam}
  \city{Amsterdam}
  \country{The Netherlands}
}
\email{m.derijke@uva.nl}
\title[Proximal Ranking Policy Optimization for Practical Safety in Counterfactual Learning to Rank]{Proximal Ranking Policy Optimization for Practical Safety in Counterfactual Learning to Rank (Extended Abstract)}
\begin{document}


\maketitle

\acresetall


\section{Introduction: Safe Counterfactual LTR}
The goal in \ac{LTR} is to find a ranking policy ($\pi$) that optimizes a given ranking metric~\cite{liu2009learning}; 
given a set of documents ($D$), a distribution of queries $Q$, and relevance function ($P(R=1 \mid d)$):
\begin{equation}
    U(\pi) =  \sum_{q \in Q} P(q \mid Q) \sum_{d} \omega(d \mid \pi) \; P(R=1 \mid d), \label{true-utility}
\end{equation}
where $\omega(d \mid \pi)$ is the weight of the document for a given policy $\pi$;
e.g., for the \ac{DCG} metric~\citep{jarvelin2002cumulated}:
\begin{equation}
    \omega_{\text{DCG}}(d \mid q, \pi) = \mathbb{E}_{y \sim \pi( \cdot \mid q)} \mleft[ (\log_2(\textrm{rank}(d \mid y) + 1))^{-1} \mright]. \label{rho}
\end{equation}
Because the relevance function is generally unknown, \ac{CLTR} uses clicks to estimate $U(\pi)$~\cite{joachims2017unbiased, wang2016learning, oosterhuis2020learning}.
The  \emph{policy-aware} approach~\citep{oosterhuis2020policy} starts by assuming the rank-based position bias model~\citep{joachims2017unbiased, wang2016learning,craswell2008experimental}, where the probability of a click on document $d$ at position $k$ is the product of a rank-based examination probability and document relevance:
\begin{equation}
P(C = 1 \mid d, k) = P(E=1 \mid k) P(R=1 \mid d) = \alpha_k P(R=1 \mid d).
\label{eq:clickmodel}
\end{equation}
Let $\mathcal{D}$ be a set of logged interaction data:
%
$
    \mathcal{D} = \big\{q_i, c_i \big\}^N_{i=1}$,
%
where each of the $N$ interactions consists of a query $q_i$ and click feedback $c_i(d) \in \{0,1\}$.
With the policy-aware propensities: $\rho_{0}(d \mid q_i, \pi_0) =  \mathbb{E}_{y \sim \pi_{0}(q_i)} \big[ \alpha_{k(d)}  \big] = \rho_{i,0}(d)$
%
%
and $\omega(d \mid q_i, \pi) = \omega_i(d)$ for brevity,
the policy-aware \ac{IPS} estimator is:
\begin{equation}
    \hat{U}_{\text{IPS}}(\pi) = \frac{1}{N} \sum_{i=1}^{N} \sum_{d}  \frac{\omega_i(d)}{\rho_{i,0}(d)} c_i(d).
    \label{cltr-obj-ips-positionbias}
\end{equation}
While being unbiased, \Ac{IPS} \ac{CLTR} methods can suffer from high-variance~\cite{gupta2024unbiased,oosterhuis2022doubly,joachims2017unbiased}.
Specifically, if data is limited, training an \ac{IPS}-based method can lead to an unreliable and unsafe ranking policy~\cite{gupta2023safe}, a problem also known in the bandit literature~\citep{thomas2015high, jagerman2020safe, swaminathan2015batch, wu2018variance, gupta2024optimal}. 

As a solution for \ac{CLTR}, \citet{gupta2023safe} introduce a risk-aware \ac{CLTR} approach that uses divergence based on the exposure distributions of policies.
They introduce normalized propensities: $\rho'\!(d) = \rho / \sum_{d} \rho(d)$; since $\rho'\!(d) \in [0,1]$ and $\sum_d \rho'\!(d) = 1$, these can be treated as a probability distribution that indicates how exposure is spread over documents.
Subsequently, \citet{gupta2023safe} use Renyi divergence to bound the real utility with probability $1-\delta$:
\begin{equation*}
P\mleft( U(\pi) \!\geq\! \hat{U}_{\text{IPS}}(\pi)\! - \!\!  \sqrt{ \frac{Z}{N}  \Big(\frac{1-\delta}{\delta}\Big) \mathbb{E}_{q} \mleft[ \sum_{d} \mleft(\frac{\rho'(d)}{\rho_{0}'(d)}\mright)^2 \!\! \rho_{0}'(d) \mright]\!}\mright) \!\geq\! 1 - \delta.
\end{equation*}
Recently \citet{gupta-2024-practical} provide a similar bound for doubly-robust \ac{CLTR}~\citep{oosterhuis2022doubly}.
However, these guarantees are only proven to hold under specific click models (e.g., Eq.~\ref{eq:clickmodel}). Therefore, they only provide a \emph{conditional} form of safety that may not hold in practice.

\begin{figure}[!t]
    \begin{center}
    \includegraphics[scale=0.48]{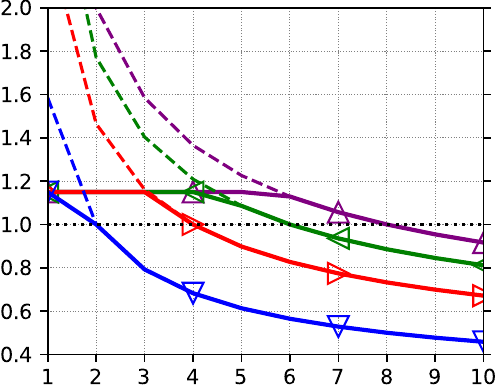}
    \includegraphics[scale=0.48]{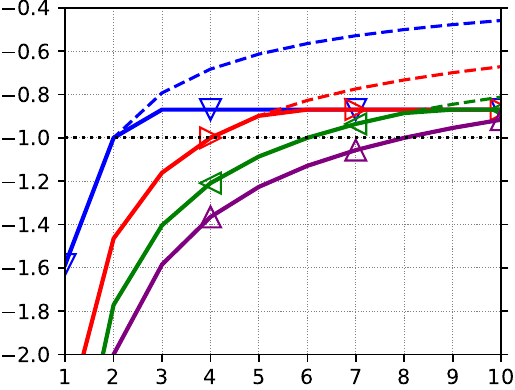}\\
    \includegraphics[width=0.99\columnwidth]{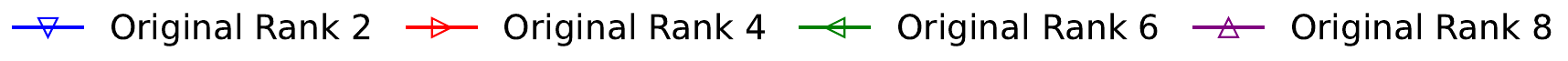}
    \end{center}
        \caption{
        Clipped weight ratios of \acs{PRPO} objective, as documents are moved from four different original ranks.
        Left: positive relevance, $r=1$; right: negative relevance, $r=-1$;
        x-axis: new rank for document;
        y-axis: unclipped weight ratios (dashed lines), $r\cdot\omega_i(d)/\omega_{i,0}(d)$;
        and 
        clipped \acs{PRPO} weight ratios (solid lines),
        $f\mleft(\omega_i(d)/\omega_{i,0}(d), \epsilon_{-} = 1.15^{-1}, \epsilon_{+}= 1.15, r=\pm1\mright)$.
        DCG metric weights used: $\omega_i(d) = \log_2(\textmd{rank}(d \mid q_i, \pi) + 1)^{-1}$.
        }
        \label{fig:prpo}
        \vspace{-\baselineskip}
\end{figure}

{\renewcommand{\arraystretch}{0.01}
\setlength{\tabcolsep}{0.04cm}
\begin{figure*}[ht!]
\vspace{-\baselineskip}
\centering
\begin{tabular}{c r r r }
&
 \multicolumn{1}{c}{ \small \hspace{0.5cm} Yahoo! Webscope}
&
 \multicolumn{1}{c}{ \small \hspace{0.5cm} MSLR-WEB30k}
&
 \multicolumn{1}{c}{ \small \hspace{0.5cm} Istella}
\\
\rotatebox[origin=lt]{90}{\hspace{1.4cm}\small  NDCG@5} &
\includegraphics[scale=0.455]{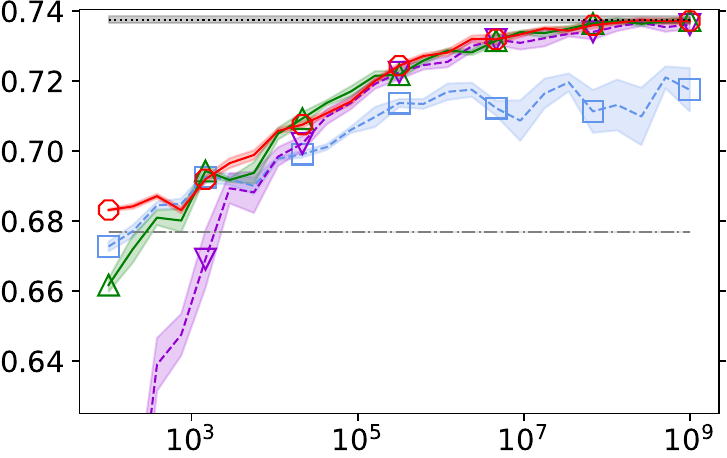} &
\includegraphics[scale=0.455]{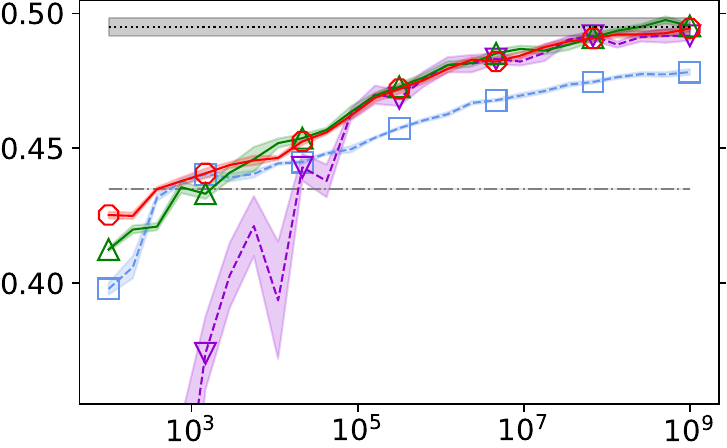} &
\includegraphics[scale=0.455]{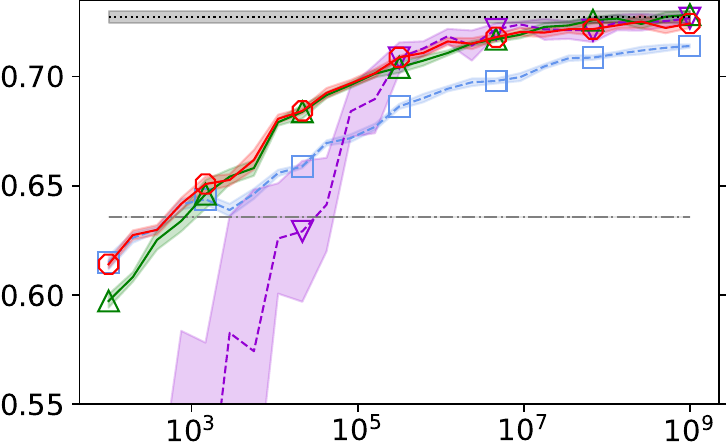}
\\
& \multicolumn{1}{c}{\small \hspace{1.75em} Number of interactions simulated ($N$)}
& \multicolumn{1}{c}{\small \hspace{1.75em} Number of interactions simulated ($N$)}
& \multicolumn{1}{c}{\small \hspace{1.75em} Number of interactions simulated ($N$)}
\\[2mm]
    \multicolumn{4}{c}{
    \includegraphics[scale=0.4]{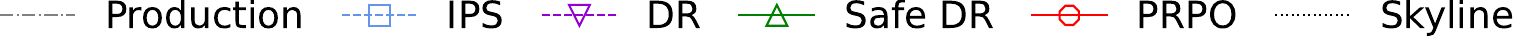}
}
\end{tabular}
\caption{
Performance in terms of NDCG@5 of the \ac{IPS}, \ac{DR}, safe \ac{DR} ($\delta=0.95$) and \ac{PRPO} ($\delta(N)=\frac{100}{N}$), with the number of simulated queries in the training data ($N$) varying from $10^2$ to $10^9$.
}
\label{fig:mainresults}
\end{figure*}
}

\setlength{\tabcolsep}{0.15em}
{\renewcommand{\arraystretch}{0.63}
\begin{figure*}[th!]
\vspace{-0.5\baselineskip}
\centering
\begin{tabular}{c r r r r}
&
 \multicolumn{1}{c}{  Yahoo! Webscope}
&
 \multicolumn{1}{c}{  MSLR-WEB30k}
&
 \multicolumn{1}{c}{  Istella}
 &

\\
\rotatebox[origin=lt]{90}{\hspace{0.65cm} \small NDCG@5} &
\includegraphics[scale=0.455]{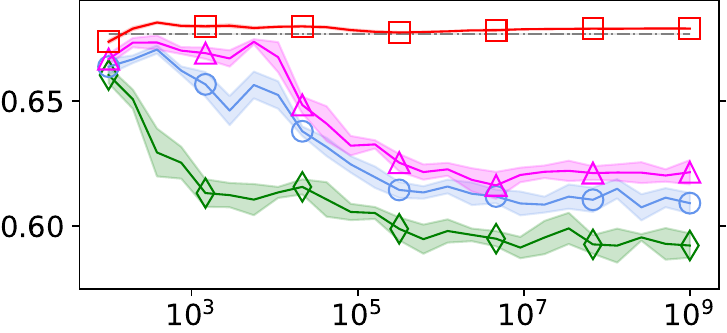} &
\includegraphics[scale=0.455]{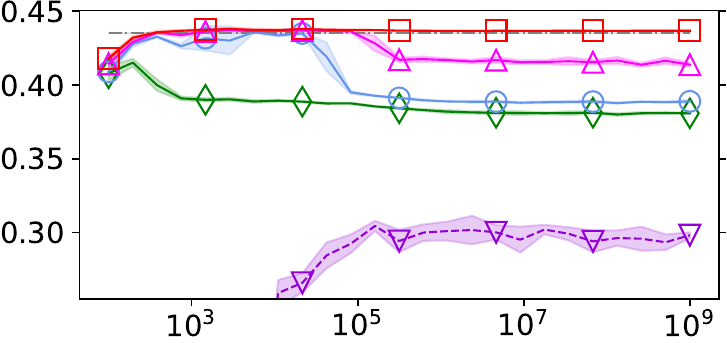} &
\includegraphics[scale=0.455]{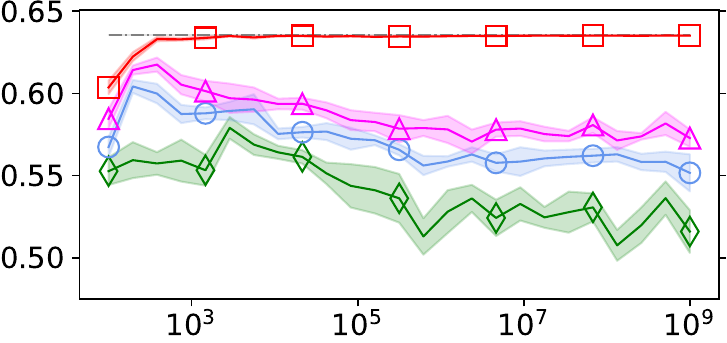} 
\\
\multicolumn{4}{c}{
\includegraphics[scale=.4]{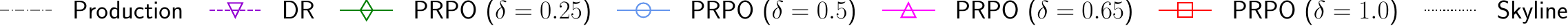}
}
\\
\rotatebox[origin=lt]{90}{\hspace{0.65cm} \small NDCG@5} &
\includegraphics[scale=0.455]{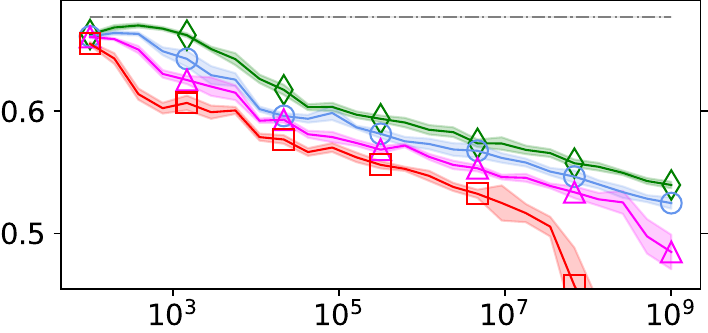} &
\includegraphics[scale=0.455]{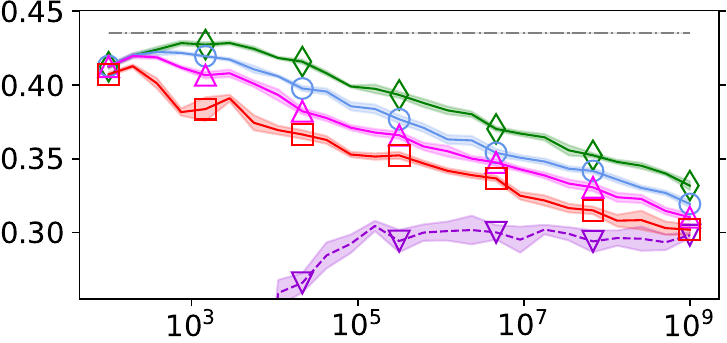} &
\includegraphics[scale=0.455]{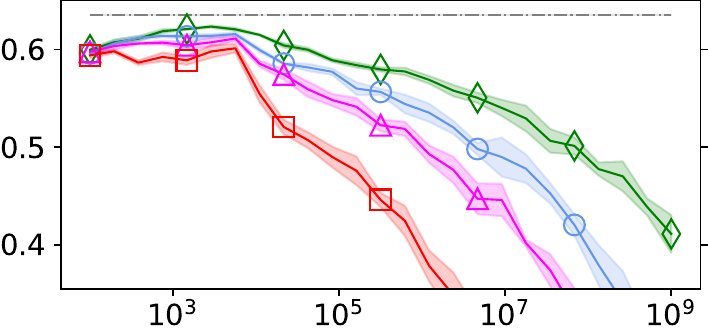}
\\
& \multicolumn{1}{c}{\small \hspace{1.75em} Number of interactions simulated ($N$)}
& \multicolumn{1}{c}{\small \hspace{1.75em} Number of interactions simulated ($N$)}
& \multicolumn{1}{c}{\small \hspace{1.75em} Number of interactions simulated ($N$)}
\\[2mm]
\multicolumn{4}{c}{
\includegraphics[scale=.4]{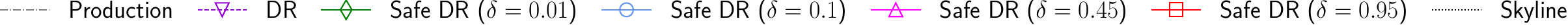}
}
\end{tabular}
\caption{
    Performance of safe \ac{DR} and \ac{PRPO} under an adversarial click model for varying data sizes. 
}
\label{fig:ablationresults_adv}
\vspace{-1.2\baselineskip}
\end{figure*}
}

\section{Proximal Ranking Policy Optimization}

In the field of \acf{RL}, \acfi{PPO} has been introduced as a method to restrict a new policy $\pi$ from deviating too much from a previously rolled-out policy $\pi_0$~\cite{schulman2015high, schulman2017proximal}.
In contrast with the earlier discussed methods, \ac{PPO} does not make use of a divergence term but uses a simple clipping operation in its optimization objective.
Inspired by this approach, we propose the first \emph{unconditionally} safe \ac{CLTR} method: \acfi{PRPO}.
Our method is designed for practical safety by making \emph{no assumptions} about user behavior.
Thereby, it provides the most robust safety for \ac{CLTR} yet.

For safety, instead of relying on a high-confidence bound~\cite{gupta-2024-practical,gupta2023safe}, \ac{PRPO} guarantees safety by removing the incentive for the new policy to rank documents too much higher than the safe logging policy. 
This is achieved by directly clipping the ratio of the metric weights for a given query $q_i$ under the new policy $\omega_i(d)$, and the logging policy ($\omega_{i,0}(d)$), i.e., $\frac{\omega_{i}(d)}{\omega_{i,0}(d)}$ to be bounded in a fixed predefined range: $\mleft[\epsilon_{-}, \epsilon_{+}\mright]$. 
As a result, the \ac{PRPO} objective provides no incentive for the new policy to produce weights $\omega_i(d)$ outside of the range: $\epsilon_{-} \cdot \omega_{i,0}(d) \leq \omega_i(d) \leq \epsilon_{+} \cdot \omega_{i,0}(d)$.
The first step to applying \ac{PRPO} is rewriting the utility of a ranking policy as a sum over the ratios multiplied by a reward function; this can be done without loss of generality.
For example, for the \ac{IPS} estimator:
\begin{equation*}
    \hat{U}(\pi) = \frac{1}{N} \sum_{i=1}^N  \sum_{d}  \frac{\omega_i(d)}{\omega_{i,0}(d)} r(d |\, q_i),
    \quad
    r_\text{IPS}(d |\, q_i) = \frac{\omega_{i,0}(d |\, q)}{\rho_{i,0}(d)}c_i(d)
    .
    \label{eq:estimator_reformulate}
\end{equation*}
\ac{PRPO} applies the following clipping function to this estimator:
\begin{equation}
    f(x,\epsilon_{-}, \epsilon_{+}, r) = 
\begin{cases}
    \min(x, \epsilon_{+}) \cdot r  & r \geq 0,\\
    \max(x, \epsilon_{-}) \cdot r & \text{otherwise}.
\end{cases}  
\label{eq:clipping_function}  
\end{equation}
%
Applying the clipping function results in the \ac{PRPO} estimator:
\begin{equation}
    \hat{U}_{\text{PRPO}}(\pi) =  \sum_{q,d \in \mathcal{D}}  f\mleft(\frac{\omega(d \mid q)}{\omega_{0}(d \mid q)}, \epsilon_{-}, \epsilon_{+}, r(d \mid q)\mright).
    \label{eq:prpo_obj}
\end{equation}
Additionally, we propose adaptive clipping that loosens as the number of datapoints $N$ increases, e.g., through a monotonically decreasing function $\delta(N)$: $\lim_{N \rightarrow \infty} \delta(N) = 0$; $\epsilon_{-} = \delta(N)$ and $\epsilon_{+} = \frac{1}{\delta(N)}$.

Figure~\ref{fig:prpo} provides a visualization of clipping that shows how it removes incentives for optimization to move documents too far from their original rank.
Appendix~\ref{sec:prpo-proof} provides a theoretical analysis.

In summary, \ac{PRPO} provides safety in \ac{CLTR} through clipping, without relying on assumptions about user behavior.
This makes it uniquely robust and practical for realistic ranking settings.



\section{Results and Discussion}

For our experiments, we apply a semi-synthetic experimental setup common in the \ac{CLTR} literature~\citep{oosterhuis2022doubly,oosterhuis2021robust,vardasbi2020inverse,gupta2023safe} on the three largest publicly available \ac{LTR} datasets: Yahoo!\ Webscope~\cite{chapelle2011yahoo}, Istella~\citep{dato2016fast}, and MSLR-WEB30k~\citep{qin2013introducing}.
See Appendix~\ref{sec:experimentalsetup} for details of this setup.

Figure~\ref{fig:mainresults} shows the performance of \ac{PRPO} (applied to a \ac{DR} estimator~\citep{gupta-2024-practical, oosterhuis2022doubly}) and our baselines: \ac{IPS} and \ac{DR} estimation and their safe \ac{CLTR} counterparts~\citep{gupta-2024-practical, gupta2023safe}, in a setting where assumptions about user behavior are correct.
We see that only safe \ac{DR} and \ac{PRPO} have the best convergence without an initial period of poor performance.
In contrast, plain \ac{DR} has poor performance when $N < 10^5$ but excellent performance as $N>10^6$; whereas \ac{IPS} does not have poor performance when $N$ is small but is outperformed by all other methods when $N > 10^4$.
Because the performance differences between safe \ac{DR} and \ac{PRPO} appear marginal, we conclude that -- in this setting where correct user behavior is assumed -- both methods provide comparable performance.
Given the simplicity and lack of assumptions of \ac{PRPO}, we see this as a positive result.

Figure~\ref{fig:ablationresults_adv} shows the performance of \ac{DR}, safe \ac{DR}, and \ac{PRPO} (with a static clipping parameter $\delta$) in an adversarial setting where the assumptions about user behavior are catastrophically incorrect.
We see that all methods result in a degradation of performance compared to the production policy.
Moreover, the performance of (unsafe) \ac{DR} is so poor that it mostly falls outside the region visualized by the graphs.
For safe \ac{DR}, we see that performance continues to decline as more data is available (and $N$ increases), a trend that continues regardless of the $\delta$ parameter.
Thereby, it appears that safe \ac{DR} is unable to provide safety in this setting where its assumptions are incorrect.
In contrast, the decreases in performance of \ac{PRPO} are clearly bounded and appear stable once $N > 10^6$.
Thus -- even in this worst case scenario -- \ac{PRPO} successfully bounds the damage that \ac{CLTR} can do to ranking performance.

We conclude that \ac{PRPO} provides a robust and practical form of safety that encompasses an important progression for safe \ac{CLTR}.
Appendix~\ref{sec:extensions} discusses further implications for the general \ac{LTR} field.




\section*{Acknowledgements}
This research was supported by Huawei Finland, 
the Dutch Research Council (NWO), under project numbers VI.Veni.222.269, 024.004.022, NWA.1389.20.\-183, and KICH3.LTP.20.006, 
the EU's Horizon Europe program under grant No 101070212,
and
a SURF Cooperative using grant no.\ EINF-8200.
All content represents the opinion of the authors, which is not necessarily shared or endorsed by their respective employers and/or sponsors.

\balance
\bibliographystyle{ACM-Reference-Format}
\bibliography{bibliography}


\begin{thebibliography}{34}


\ifx \showCODEN    \undefined \def \showCODEN     #1{\unskip}     \fi
\ifx \showDOI      \undefined \def \showDOI       #1{#1}\fi
\ifx \showISBNx    \undefined \def \showISBNx     #1{\unskip}     \fi
\ifx \showISBNxiii \undefined \def \showISBNxiii  #1{\unskip}     \fi
\ifx \showISSN     \undefined \def \showISSN      #1{\unskip}     \fi
\ifx \showLCCN     \undefined \def \showLCCN      #1{\unskip}     \fi
\ifx \shownote     \undefined \def \shownote      #1{#1}          \fi
\ifx \showarticletitle \undefined \def \showarticletitle #1{#1}   \fi
\ifx \showURL      \undefined \def \showURL       {\relax}        \fi
\providecommand\bibfield[2]{#2}
\providecommand\bibinfo[2]{#2}
\providecommand\natexlab[1]{#1}
\providecommand\showeprint[2][]{arXiv:#2}

\bibitem[\protect\citeauthoryear{Agarwal, Wang, Li, Bendersky, and Najork}{Agarwal et~al\mbox{.}}{2019}]%
        {agarwal2019addressing}
\bibfield{author}{\bibinfo{person}{Aman Agarwal}, \bibinfo{person}{Xuanhui Wang}, \bibinfo{person}{Cheng Li}, \bibinfo{person}{Michael Bendersky}, {and} \bibinfo{person}{Marc Najork}.} \bibinfo{year}{2019}\natexlab{}.
\newblock \showarticletitle{Addressing Trust Bias for Unbiased Learning-to-rank}. In \bibinfo{booktitle}{\emph{The World Wide Web Conference}}. \bibinfo{pages}{4--14}.
\newblock


\bibitem[\protect\citeauthoryear{Burges, Ragno, and Le}{Burges et~al\mbox{.}}{2006}]%
        {burges2006learning}
\bibfield{author}{\bibinfo{person}{Christopher Burges}, \bibinfo{person}{Robert Ragno}, {and} \bibinfo{person}{Quoc Le}.} \bibinfo{year}{2006}\natexlab{}.
\newblock \showarticletitle{Learning to Rank with Nonsmooth Cost Functions}.
\newblock \bibinfo{journal}{\emph{Advances in Neural Information Processing Systems}}  \bibinfo{volume}{19} (\bibinfo{year}{2006}).
\newblock


\bibitem[\protect\citeauthoryear{Burges}{Burges}{2010}]%
        {burges2010ranknet}
\bibfield{author}{\bibinfo{person}{Christopher~JC Burges}.} \bibinfo{year}{2010}\natexlab{}.
\newblock \showarticletitle{From RankNet to LambdaRank to LambdaMART}.
\newblock \bibinfo{journal}{\emph{Learning}} \bibinfo{volume}{11}, \bibinfo{number}{23-581} (\bibinfo{year}{2010}), \bibinfo{pages}{81}.
\newblock


\bibitem[\protect\citeauthoryear{Chapelle and Chang}{Chapelle and Chang}{2011}]%
        {chapelle2011yahoo}
\bibfield{author}{\bibinfo{person}{Olivier Chapelle} {and} \bibinfo{person}{Yi Chang}.} \bibinfo{year}{2011}\natexlab{}.
\newblock \showarticletitle{Yahoo! Learning to Rank Challenge Overview}. In \bibinfo{booktitle}{\emph{Proceedings of the Learning to Rank Challenge}}. PMLR, \bibinfo{pages}{1--24}.
\newblock


\bibitem[\protect\citeauthoryear{Chuklin, Markov, and de~Rijke}{Chuklin et~al\mbox{.}}{2015}]%
        {chuklin-click-2015}
\bibfield{author}{\bibinfo{person}{Aleksandr Chuklin}, \bibinfo{person}{Ilya Markov}, {and} \bibinfo{person}{Maarten de Rijke}.} \bibinfo{year}{2015}\natexlab{}.
\newblock \bibinfo{booktitle}{\emph{Click Models for Web Search}}.
\newblock \bibinfo{publisher}{Morgan \& Claypool Publishers}.
\newblock


\bibitem[\protect\citeauthoryear{Craswell, Zoeter, Taylor, and Ramsey}{Craswell et~al\mbox{.}}{2008}]%
        {craswell2008experimental}
\bibfield{author}{\bibinfo{person}{Nick Craswell}, \bibinfo{person}{Onno Zoeter}, \bibinfo{person}{Michael Taylor}, {and} \bibinfo{person}{Bill Ramsey}.} \bibinfo{year}{2008}\natexlab{}.
\newblock \showarticletitle{An Experimental Comparison of Click Position-bias Models}. In \bibinfo{booktitle}{\emph{Proceedings of the 2008 International Conference on Web Search and Data Mining}}. \bibinfo{pages}{87--94}.
\newblock


\bibitem[\protect\citeauthoryear{Dato, Lucchese, Nardini, Orlando, Perego, Tonellotto, and Venturini}{Dato et~al\mbox{.}}{2016}]%
        {dato2016fast}
\bibfield{author}{\bibinfo{person}{Domenico Dato}, \bibinfo{person}{Claudio Lucchese}, \bibinfo{person}{Franco~Maria Nardini}, \bibinfo{person}{Salvatore Orlando}, \bibinfo{person}{Raffaele Perego}, \bibinfo{person}{Nicola Tonellotto}, {and} \bibinfo{person}{Rossano Venturini}.} \bibinfo{year}{2016}\natexlab{}.
\newblock \showarticletitle{Fast Ranking with Additive Ensembles of Oblivious and Non-Oblivious Regression Trees}.
\newblock \bibinfo{journal}{\emph{ACM Transactions on Information Systems (TOIS)}} \bibinfo{volume}{35}, \bibinfo{number}{2} (\bibinfo{year}{2016}), \bibinfo{pages}{1--31}.
\newblock


\bibitem[\protect\citeauthoryear{Gupta, Hager, Huang, Vardasbi, and Oosterhuis}{Gupta et~al\mbox{.}}{2024a}]%
        {gupta2024unbiased}
\bibfield{author}{\bibinfo{person}{Shashank Gupta}, \bibinfo{person}{Philipp Hager}, \bibinfo{person}{Jin Huang}, \bibinfo{person}{Ali Vardasbi}, {and} \bibinfo{person}{Harrie Oosterhuis}.} \bibinfo{year}{2024}\natexlab{a}.
\newblock \showarticletitle{Unbiased Learning to Rank: On Recent Advances and Practical Applications}. In \bibinfo{booktitle}{\emph{Proceedings of the 17th ACM International Conference on Web Search and Data Mining}}. \bibinfo{pages}{1118--1121}.
\newblock


\bibitem[\protect\citeauthoryear{Gupta, Jeunen, Oosterhuis, and de~Rijke}{Gupta et~al\mbox{.}}{2024b}]%
        {gupta2024optimal}
\bibfield{author}{\bibinfo{person}{Shashank Gupta}, \bibinfo{person}{Olivier Jeunen}, \bibinfo{person}{Harrie Oosterhuis}, {and} \bibinfo{person}{Maarten de Rijke}.} \bibinfo{year}{2024}\natexlab{b}.
\newblock \showarticletitle{Optimal Baseline Corrections for Off-Policy Contextual Bandits}.
\newblock \bibinfo{journal}{\emph{arXiv preprint arXiv:2405.05736}} (\bibinfo{year}{2024}).
\newblock


\bibitem[\protect\citeauthoryear{Gupta, Oosterhuis, and de~Rijke}{Gupta et~al\mbox{.}}{2023}]%
        {gupta2023safe}
\bibfield{author}{\bibinfo{person}{Shashank Gupta}, \bibinfo{person}{Harrie Oosterhuis}, {and} \bibinfo{person}{Maarten de Rijke}.} \bibinfo{year}{2023}\natexlab{}.
\newblock \showarticletitle{Safe Deployment for Counterfactual Learning to Rank with Exposure-Based Risk Minimization}. In \bibinfo{booktitle}{\emph{Proceedings of the 46th International ACM SIGIR Conference on Research and Development in Information Retrieval}}.
\newblock


\bibitem[\protect\citeauthoryear{Gupta, Oosterhuis, and de~Rijke}{Gupta et~al\mbox{.}}{2024c}]%
        {gupta-2024-practical}
\bibfield{author}{\bibinfo{person}{Shashank Gupta}, \bibinfo{person}{Harrie Oosterhuis}, {and} \bibinfo{person}{Maarten de Rijke}.} \bibinfo{year}{2024}\natexlab{c}.
\newblock \showarticletitle{Practical and Robust Safety Guarantees for Advanced Counterfactual Learning to Rank}. In \bibinfo{booktitle}{\emph{CIKM 2024: 33rd ACM International Conference on Information and Knowledge Management}}. \bibinfo{publisher}{ACM}.
\newblock


\bibitem[\protect\citeauthoryear{Jagerman, Markov, and de~Rijke}{Jagerman et~al\mbox{.}}{2020}]%
        {jagerman2020safe}
\bibfield{author}{\bibinfo{person}{Rolf Jagerman}, \bibinfo{person}{Ilya Markov}, {and} \bibinfo{person}{Maarten de Rijke}.} \bibinfo{year}{2020}\natexlab{}.
\newblock \showarticletitle{Safe Exploration for Optimizing Contextual Bandits}.
\newblock \bibinfo{journal}{\emph{ACM Transactions on Information Systems (TOIS)}} \bibinfo{volume}{38}, \bibinfo{number}{3} (\bibinfo{year}{2020}), \bibinfo{pages}{1--23}.
\newblock


\bibitem[\protect\citeauthoryear{J{\"a}rvelin and Kek{\"a}l{\"a}inen}{J{\"a}rvelin and Kek{\"a}l{\"a}inen}{2002}]%
        {jarvelin2002cumulated}
\bibfield{author}{\bibinfo{person}{Kalervo J{\"a}rvelin} {and} \bibinfo{person}{Jaana Kek{\"a}l{\"a}inen}.} \bibinfo{year}{2002}\natexlab{}.
\newblock \showarticletitle{Cumulated Gain-Based Evaluation of IR Techniques}.
\newblock \bibinfo{journal}{\emph{ACM Transactions on Information Systems (TOIS)}} \bibinfo{volume}{20}, \bibinfo{number}{4} (\bibinfo{year}{2002}), \bibinfo{pages}{422--446}.
\newblock


\bibitem[\protect\citeauthoryear{Joachims, Swaminathan, and Schnabel}{Joachims et~al\mbox{.}}{2017}]%
        {joachims2017unbiased}
\bibfield{author}{\bibinfo{person}{Thorsten Joachims}, \bibinfo{person}{Adith Swaminathan}, {and} \bibinfo{person}{Tobias Schnabel}.} \bibinfo{year}{2017}\natexlab{}.
\newblock \showarticletitle{Unbiased Learning-to-rank with Biased Feedback}. In \bibinfo{booktitle}{\emph{Proceedings of the Tenth ACM International Conference on Web Search and Data Mining}}. \bibinfo{pages}{781--789}.
\newblock


\bibitem[\protect\citeauthoryear{Liu}{Liu}{2009}]%
        {liu2009learning}
\bibfield{author}{\bibinfo{person}{Tie-Yan Liu}.} \bibinfo{year}{2009}\natexlab{}.
\newblock \showarticletitle{Learning to Rank for Information Retrieval}.
\newblock \bibinfo{journal}{\emph{Foundations and Trends in Information Retrieval}} \bibinfo{volume}{3}, \bibinfo{number}{3} (\bibinfo{year}{2009}), \bibinfo{pages}{225--331}.
\newblock


\bibitem[\protect\citeauthoryear{Oosterhuis}{Oosterhuis}{2020}]%
        {oosterhuis2020learning}
\bibfield{author}{\bibinfo{person}{Harrie Oosterhuis}.} \bibinfo{year}{2020}\natexlab{}.
\newblock \emph{\bibinfo{title}{Learning from User Interactions with Rankings: A Unification of the Field}}.
\newblock \bibinfo{thesistype}{Ph.\,D. Dissertation}. \bibinfo{school}{Informatics Institute, University of Amsterdam}.
\newblock


\bibitem[\protect\citeauthoryear{Oosterhuis}{Oosterhuis}{2021}]%
        {oosterhuis2021computationally}
\bibfield{author}{\bibinfo{person}{Harrie Oosterhuis}.} \bibinfo{year}{2021}\natexlab{}.
\newblock \showarticletitle{Computationally Efficient Optimization of Plackett-Luce Ranking Models for Relevance and Fairness}. In \bibinfo{booktitle}{\emph{Proceedings of the 44th International ACM SIGIR Conference on Research and Development in Information Retrieval}}. \bibinfo{pages}{1023--1032}.
\newblock


\bibitem[\protect\citeauthoryear{Oosterhuis}{Oosterhuis}{2022}]%
        {oosterhuis2022learning}
\bibfield{author}{\bibinfo{person}{Harrie Oosterhuis}.} \bibinfo{year}{2022}\natexlab{}.
\newblock \showarticletitle{Learning-to-Rank at the Speed of Sampling: Plackett-Luce Gradient Estimation With Minimal Computational Complexity}. In \bibinfo{booktitle}{\emph{Proceedings of the 45th International ACM SIGIR Conference on Research and Development in Information Retrieval}}. \bibinfo{pages}{2266--2271}.
\newblock


\bibitem[\protect\citeauthoryear{Oosterhuis}{Oosterhuis}{2023}]%
        {oosterhuis2022doubly}
\bibfield{author}{\bibinfo{person}{Harrie Oosterhuis}.} \bibinfo{year}{2023}\natexlab{}.
\newblock \showarticletitle{Doubly Robust Estimation for Correcting Position Bias in Click Feedback for Unbiased Learning to Rank}.
\newblock \bibinfo{journal}{\emph{ACM Transactions on Information Systems}} \bibinfo{volume}{41}, \bibinfo{number}{3} (\bibinfo{year}{2023}), \bibinfo{pages}{1--33}.
\newblock


\bibitem[\protect\citeauthoryear{Oosterhuis and de~Rijke}{Oosterhuis and de~Rijke}{2020}]%
        {oosterhuis2020policy}
\bibfield{author}{\bibinfo{person}{Harrie Oosterhuis} {and} \bibinfo{person}{Maarten de Rijke}.} \bibinfo{year}{2020}\natexlab{}.
\newblock \showarticletitle{Policy-aware Unbiased Learning to Rank for Top-k Rankings}. In \bibinfo{booktitle}{\emph{Proceedings of the 43rd International ACM SIGIR Conference on Research and Development in Information Retrieval}}. \bibinfo{pages}{489--498}.
\newblock


\bibitem[\protect\citeauthoryear{Oosterhuis and de~Rijke}{Oosterhuis and de~Rijke}{2021a}]%
        {oosterhuis2021unifying}
\bibfield{author}{\bibinfo{person}{Harrie Oosterhuis} {and} \bibinfo{person}{Maarten de Rijke}.} \bibinfo{year}{2021}\natexlab{a}.
\newblock \showarticletitle{Unifying Online and Counterfactual Learning to Rank: A Novel Counterfactual Estimator that Effectively Utilizes Online Interventions}. In \bibinfo{booktitle}{\emph{Proceedings of the 14th ACM International Conference on Web Search and Data Mining}}. \bibinfo{pages}{463--471}.
\newblock


\bibitem[\protect\citeauthoryear{Oosterhuis and de~Rijke}{Oosterhuis and de~Rijke}{2021b}]%
        {oosterhuis2021robust}
\bibfield{author}{\bibinfo{person}{Harrie Oosterhuis} {and} \bibinfo{person}{Maarten~de de Rijke}.} \bibinfo{year}{2021}\natexlab{b}.
\newblock \showarticletitle{Robust Generalization and Safe Query-Specialization in Counterfactual Learning to Rank}. In \bibinfo{booktitle}{\emph{Proceedings of the Web Conference 2021}}. \bibinfo{pages}{158--170}.
\newblock


\bibitem[\protect\citeauthoryear{Qin and Liu}{Qin and Liu}{2013}]%
        {qin2013introducing}
\bibfield{author}{\bibinfo{person}{Tao Qin} {and} \bibinfo{person}{Tie-Yan Liu}.} \bibinfo{year}{2013}\natexlab{}.
\newblock \showarticletitle{Introducing LETOR 4.0 Datasets}.
\newblock \bibinfo{journal}{\emph{arXiv preprint arXiv:1306.2597}} (\bibinfo{year}{2013}).
\newblock


\bibitem[\protect\citeauthoryear{Schulman, Moritz, Levine, Jordan, and Abbeel}{Schulman et~al\mbox{.}}{2015}]%
        {schulman2015high}
\bibfield{author}{\bibinfo{person}{John Schulman}, \bibinfo{person}{Philipp Moritz}, \bibinfo{person}{Sergey Levine}, \bibinfo{person}{Michael Jordan}, {and} \bibinfo{person}{Pieter Abbeel}.} \bibinfo{year}{2015}\natexlab{}.
\newblock \showarticletitle{High-dimensional Continuous Control using Generalized Advantage Estimation}.
\newblock \bibinfo{journal}{\emph{arXiv preprint arXiv:1506.02438}} (\bibinfo{year}{2015}).
\newblock


\bibitem[\protect\citeauthoryear{Schulman, Wolski, Dhariwal, Radford, and Klimov}{Schulman et~al\mbox{.}}{2017}]%
        {schulman2017proximal}
\bibfield{author}{\bibinfo{person}{John Schulman}, \bibinfo{person}{Filip Wolski}, \bibinfo{person}{Prafulla Dhariwal}, \bibinfo{person}{Alec Radford}, {and} \bibinfo{person}{Oleg Klimov}.} \bibinfo{year}{2017}\natexlab{}.
\newblock \showarticletitle{Proximal Policy Optimization Algorithms}.
\newblock \bibinfo{journal}{\emph{arXiv preprint arXiv:1707.06347}} (\bibinfo{year}{2017}).
\newblock


\bibitem[\protect\citeauthoryear{Swaminathan and Joachims}{Swaminathan and Joachims}{2015}]%
        {swaminathan2015batch}
\bibfield{author}{\bibinfo{person}{Adith Swaminathan} {and} \bibinfo{person}{Thorsten Joachims}.} \bibinfo{year}{2015}\natexlab{}.
\newblock \showarticletitle{Batch Learning from Logged Bandit Feedback through Counterfactual Risk Minimization}.
\newblock \bibinfo{journal}{\emph{The Journal of Machine Learning Research}} \bibinfo{volume}{16}, \bibinfo{number}{1} (\bibinfo{year}{2015}), \bibinfo{pages}{1731--1755}.
\newblock


\bibitem[\protect\citeauthoryear{Thomas, Theocharous, and Ghavamzadeh}{Thomas et~al\mbox{.}}{2015}]%
        {thomas2015high}
\bibfield{author}{\bibinfo{person}{Philip Thomas}, \bibinfo{person}{Georgios Theocharous}, {and} \bibinfo{person}{Mohammad Ghavamzadeh}.} \bibinfo{year}{2015}\natexlab{}.
\newblock \showarticletitle{High-Confidence Off-Policy Evaluation}. In \bibinfo{booktitle}{\emph{Proceedings of the AAAI Conference on Artificial Intelligence}}, Vol.~\bibinfo{volume}{29}.
\newblock


\bibitem[\protect\citeauthoryear{Ustimenko and Prokhorenkova}{Ustimenko and Prokhorenkova}{2020}]%
        {ustimenko2020stochasticrank}
\bibfield{author}{\bibinfo{person}{Aleksei Ustimenko} {and} \bibinfo{person}{Liudmila Prokhorenkova}.} \bibinfo{year}{2020}\natexlab{}.
\newblock \showarticletitle{StochasticRank: Global Optimization of Scale-Free Discrete Functions}. In \bibinfo{booktitle}{\emph{International Conference on Machine Learning}}. PMLR, \bibinfo{pages}{9669--9679}.
\newblock


\bibitem[\protect\citeauthoryear{Vardasbi, Oosterhuis, and de~Rijke}{Vardasbi et~al\mbox{.}}{2020}]%
        {vardasbi2020inverse}
\bibfield{author}{\bibinfo{person}{Ali Vardasbi}, \bibinfo{person}{Harrie Oosterhuis}, {and} \bibinfo{person}{Maarten de Rijke}.} \bibinfo{year}{2020}\natexlab{}.
\newblock \showarticletitle{When Inverse Propensity Scoring does not Work: Affine Corrections for Unbiased Learning to Rank}. In \bibinfo{booktitle}{\emph{Proceedings of the 29th ACM International Conference on Information \& Knowledge Management}}. \bibinfo{pages}{1475--1484}.
\newblock


\bibitem[\protect\citeauthoryear{Wang, Bendersky, Metzler, and Najork}{Wang et~al\mbox{.}}{2016}]%
        {wang2016learning}
\bibfield{author}{\bibinfo{person}{Xuanhui Wang}, \bibinfo{person}{Michael Bendersky}, \bibinfo{person}{Donald Metzler}, {and} \bibinfo{person}{Marc Najork}.} \bibinfo{year}{2016}\natexlab{}.
\newblock \showarticletitle{Learning to Rank with Selection Bias in Personal Search}. In \bibinfo{booktitle}{\emph{Proceedings of the 39th International ACM SIGIR conference on Research and Development in Information Retrieval}}. ACM, \bibinfo{pages}{115--124}.
\newblock


\bibitem[\protect\citeauthoryear{Wang, Li, Golbandi, Bendersky, and Najork}{Wang et~al\mbox{.}}{2018}]%
        {wang2018lambdaloss}
\bibfield{author}{\bibinfo{person}{Xuanhui Wang}, \bibinfo{person}{Cheng Li}, \bibinfo{person}{Nadav Golbandi}, \bibinfo{person}{Michael Bendersky}, {and} \bibinfo{person}{Marc Najork}.} \bibinfo{year}{2018}\natexlab{}.
\newblock \showarticletitle{The LambdaLoss Framework for Ranking Metric Optimization}. In \bibinfo{booktitle}{\emph{Proceedings of the 27th ACM international conference on information and knowledge management}}. \bibinfo{pages}{1313--1322}.
\newblock


\bibitem[\protect\citeauthoryear{Williams}{Williams}{1992}]%
        {williams1992simple}
\bibfield{author}{\bibinfo{person}{Ronald~J. Williams}.} \bibinfo{year}{1992}\natexlab{}.
\newblock \showarticletitle{Simple Statistical Gradient-Following Algorithms for Connectionist Reinforcement Learning}.
\newblock \bibinfo{journal}{\emph{Machine Learning}} \bibinfo{volume}{8}, \bibinfo{number}{3} (\bibinfo{year}{1992}), \bibinfo{pages}{229--256}.
\newblock


\bibitem[\protect\citeauthoryear{Wu and Wang}{Wu and Wang}{2018}]%
        {wu2018variance}
\bibfield{author}{\bibinfo{person}{Hang Wu} {and} \bibinfo{person}{May Wang}.} \bibinfo{year}{2018}\natexlab{}.
\newblock \showarticletitle{Variance Regularized Counterfactual Risk Minimization via Variational Divergence Minimization}. In \bibinfo{booktitle}{\emph{International Conference on Machine Learning}}. PMLR, \bibinfo{pages}{5353--5362}.
\newblock


\bibitem[\protect\citeauthoryear{Yadav, Du, and Joachims}{Yadav et~al\mbox{.}}{2021}]%
        {yadav2021policy}
\bibfield{author}{\bibinfo{person}{Himank Yadav}, \bibinfo{person}{Zhengxiao Du}, {and} \bibinfo{person}{Thorsten Joachims}.} \bibinfo{year}{2021}\natexlab{}.
\newblock \showarticletitle{Policy-Gradient Training of Fair and Unbiased Ranking Functions}. In \bibinfo{booktitle}{\emph{Proceedings of the 44th International ACM SIGIR Conference on Research and Development in Information Retrieval}}. \bibinfo{pages}{1044--1053}.
\newblock


\end{thebibliography}

\appendix
\section*{Appendix}


\section{Simulation and Evaluation Setup}
\label{sec:experimentalsetup}

We use the three largest publicly available \ac{LTR} datasets: Yahoo!\ Webscope~\cite{chapelle2011yahoo}, MSLR-WEB30k~\citep{qin2013introducing}, and Istella~\citep{dato2016fast}. The datasets consist of queries, a preselected list of documents per query, query-document feature vectors, and manually-graded relevance judgments for each query-document pair. Following~\cite{oosterhuis2022doubly,vardasbi2020inverse,gupta2023safe}, we train a production ranker on a $3\%$ fraction of the training queries and their corresponding relevance judgments.
The goal is to simulate a real-world setting where a ranker trained on manual judgments is deployed in production and is used to collect click logs.
The collected click logs can then be used for \ac{CLTR}. 
We assume the production ranker is safe, given that it would serve live traffic in a real-world setup.

We simulate a top-$K$ ranking setup~\cite{oosterhuis2020policy} where only $K=5$ documents are displayed to the user for a given query, and any document beyond that gets zero exposure.
To get the relevance probability, we apply the following transformation: $P(R=1 \mid q, d) = 0.25 \cdot rel(q,d)$, where $rel(q,d) \in \{0,1,2,3,4\}$ is the relevance judgment for the given query-document pair.
We generate clicks based on the trust bias click model:
\begin{equation}
    P(C=1 \mid q, d, k) = \alpha_k P(R=1 \mid q, d) + \beta_k.   
\label{click_simulation}
\end{equation}
The trust bias parameters are set based on the empirical observation in \citep{agarwal2019addressing}: $\alpha = [0.35, 0.53, 0.55, 0.54, 0.52]$, and $\beta = [0.65, 0.26, 0.15$, $0.11$, $0.08]$.
For \ac{CLTR} training, we only use the training and validation clicks generated via the click simulation process (Eq.~\ref{click_simulation}).
To test the robustness of the safe \ac{CLTR} methods in a setting where the click model assumptions do not hold,
we simulate an \emph{adversarial click model}, where the user clicks on the irrelevant document with a high probability and on a relevant document with a low click probability.
We define the adversarial click model as:
\begin{equation}
    P(C=1 \mid q, d, k) = 1 - \mleft( \alpha_k P(R=1 \mid q, d) + \beta_k \mright).   
\label{click_simulation_adv}
\end{equation}
Thereby, we simulate a maximally \emph{adversarial} user who clicks on documents with a click probability that is inversely correlated with the assumed trust bias model.

Further, we assume that the logging propensities have to be estimated.
For the logging propensities $\rho_0$, and the logging metric weights ($\omega_0$), we use a simple Monte-Carlo estimate~\citep{gupta2023safe}:
\begin{equation}
    \hat{\rho}_0(d ) = \frac{1}{N} \sum^N_{i=1: y_i \sim \pi_{0}\hspace{-1cm}}   \alpha_{k_{i}(d)},
    \quad
    \hat{\omega}_0(d ) = \frac{1}{N} \sum^N_{i=1: y_i \sim \pi_{0}\hspace{-0.8cm}} \mleft( \alpha_{k_{i}(d)} + \beta_{k_{i}(d)} \mright).
    \label{prop-estimate}  
\end{equation}
%
For the learned policies ($\pi$), we optimize \ac{PL} ranking models~\citep{oosterhuis2021computationally} using the REINFORCE policy-gradient method~\cite{yadav2021policy,gupta2023safe}.
We perform clipping on the logging propensities only for the training clicks and not for the validation set. 
Following previous work, we set the clipping parameter to $10 / \sqrt{N}$~\cite{gupta2023safe,oosterhuis2021unifying}. 
We do not apply the clipping operation for the logging metric weights.
To prevent overfitting, we apply early stopping based on the validation clicks.
For variance reduction, we follow~\cite{yadav2021policy,gupta2023safe} and use the average reward per query as a control-variate.

As our evaluation metric, we compute the NDCG@5 metric using the relevance judgments on the test split of each dataset~\citep{jarvelin2002cumulated}. 
Finally, the following methods are included in our comparisons:
\begin{enumerate*}[leftmargin=*,label=(\roman*)]
    \item  \emph{IPS}. The \ac{IPS} estimator with affine correction~\cite{vardasbi2020inverse,oosterhuis2021unifying} for \ac{CLTR} with trust bias.
    \item  \emph{Doubly Robust}. The \ac{DR} estimator for \ac{CLTR} with trust bias~\cite{oosterhuis2022doubly}. 
    %
    \item  \emph{Safe \ac{DR}}. The recently-introduced safe \ac{DR} \ac{CLTR} method~\citep{gupta-2024-practical}, which relies on a trust bias assumption.
     \item  \emph{\ac{PRPO}}. Our proposed \acfi{PRPO} method for safe \ac{CLTR} (Eq.~\ref{eq:prpo_obj}), we apply it to the \ac{DR} estimator (see \citep{gupta-2024-practical} for a more detailed formulation).
     \item  \emph{Skyline.} \ac{LTR} method trained on the true relevance labels. Given that it is trained on the real relevance signal, the skyline performance is the upper bound on possible \ac{CLTR} performance. 
\end{enumerate*}
All source code to reproduce our experimental results is available at: 
\url{https://github.com/shashankg7/cikm-safeultr}.

\section{Formal Proof of Safety}\label{sec:prpo-proof}
\begin{theorem}
Let $q$ be a query, $\omega$ be metric weights, $y_0$ be a logging policy ranking, and $y^*(\epsilon_{-},\epsilon_{+})$ be the ranking that optimizes the \ac{PRPO} objective in Eq.~\ref{eq:prpo_obj}.
Assume that $\forall d, \in \mathcal{D}, r(d \mid q) \not= 0$.
Then, 
for any $\Delta \in \mathbb{R}_{\geq0}$, there exist values for $\epsilon_{-}$ and $\epsilon_{+}$ that guarantee that the difference between the utility of $y_0$ and $y^*(\epsilon_{-},\epsilon_{+})$ is bounded by $\Delta$:
\begin{equation}
\forall \Delta \! \in \mathbb{R}_{\geq0}, \exists \epsilon_{-} \!\!\in \mathbb{R}_{\geq0}, \epsilon_{+}\!\! \in \mathbb{R}_{\geq0};  \;| U(y_0) -  U(y^*(\epsilon_{-}, \epsilon_{+}))  | \leq \Delta.
\label{eq:prpotheorem}
\end{equation}
\end{theorem}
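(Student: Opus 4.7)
The approach rests on a simple structural fact about the \ac{PRPO} clipping function $f$: whenever $\epsilon_{-} \leq 1 \leq \epsilon_{+}$, we have $f(1, \epsilon_{-}, \epsilon_{+}, r) = r$ for every nonzero $r$, since $\min(1,\epsilon_{+}) = \max(1,\epsilon_{-}) = 1$. Because each weight ratio at the logging ranking satisfies $\omega(d \mid y_0)/\omega_0(d) = 1$, the baseline \ac{PRPO} objective value is $\hat{U}_{\text{PRPO}}(y_0) = \sum_d r(d \mid q)$. The first step of my plan is to show that, in the extreme choice $\epsilon_{-} = \epsilon_{+} = 1$, this baseline is the global maximum: for each document, $\min(x_d, 1) r_d \leq r_d$ when $r_d > 0$, and $\max(x_d, 1) r_d \leq r_d$ when $r_d < 0$ (the latter because $\max(x_d,1) \geq 1$ and multiplying by a negative reverses the inequality). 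Summing over documents, $\hat{U}_{\text{PRPO}}(y) \leq \sum_d r_d$ for every ranking $y$, with equality at $y_0$. Thus $y_0$ is itself an optimum at $\epsilon_{-}=\epsilon_{+}=1$; taking $y^*(1,1) = y_0$ gives $|U(y_0) - U(y^*(1,1))| = 0 \leq \Delta$ for every $\Delta \geq 0$, which already discharges the existential claim.

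For a more quantitative version allowing $\epsilon_{\pm} \neq 1$, I would expand
\[
U(y^*) - U(y_0) = \sum_d \mleft( \frac{\omega(d \mid y^*)}{\omega_0(d)} - 1 \mright) \omega_0(d) \, P(R=1 \mid d)
\]
and argue that an appropriately selected \ac{PRPO}-optimum has every ratio inside $[\epsilon_{-},\epsilon_{+}]$. The justification is that $f$ is flat beyond its thresholds, so pushing a ratio past $\epsilon_{+}$ (for $r_d>0$) or below $\epsilon_{-}$ (for $r_d<0$) never improves \ac{PRPO}, while the permutation constraint means such a move typically displaces another document out of its clipping sweet-spot. Combined with the hypothesis $r(d \mid q) \neq 0$, which guarantees each document has a well-defined preferred direction, this yields the pointwise bound $|\omega(d \mid y^*)/\omega_0(d) - 1| \leq \max(\epsilon_{+}-1, 1-\epsilon_{-})$, hence
\[
|U(y^*) - U(y_0)| \leq \max(\epsilon_{+}-1, 1-\epsilon_{-}) \cdot U(y_0).
\]
Picking $\epsilon_{+} = 1 + \Delta/U(y_0)$ and $\epsilon_{-} = \max(0, 1 - \Delta/U(y_0))$ then achieves the stated bound.

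The main obstacle is rigorously pinning down \emph{which} optimum deserves the name $y^*(\epsilon_{-},\epsilon_{+})$: because the clipping only kills the incentive beyond its thresholds rather than imposing a hard constraint, distinct \ac{PRPO}-optima can sit with ratios outside $[\epsilon_{-},\epsilon_{+}]$ and with utilities differing substantially from $U(y_0)$. The trivial construction $\epsilon_{-}=\epsilon_{+}=1$ sidesteps this difficulty by making $y_0$ itself an optimum, and that is the cleanest route to the theorem as stated. Strengthening to the quantitative version would additionally require a tie-breaking convention -- e.g., selecting the \ac{PRPO}-optimum of minimal $\sum_d (\omega(d \mid y)/\omega_0(d) - 1)^2$ -- or an argument about a gradient-based optimiser initialised at $\pi_0$ that cannot escape the clipping envelope.
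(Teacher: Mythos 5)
Your first paragraph is exactly the paper's proof: choose $\epsilon_{-}=\epsilon_{+}=1$, observe that clipping then makes $y_0$ a maximizer of the PRPO objective (your per-document inequalities $\min(x_d,1)r_d \leq r_d$ and $\max(x_d,1)r_d \leq r_d$ spell out what the paper states more tersely), and conclude the utility gap is $0 \leq \Delta$. The quantitative extension in your remaining paragraphs is not needed for the theorem as stated and, as you yourself note, has unresolved gaps about which optimum is selected; the paper likewise relegates the non-degenerate choices of $\epsilon_{\pm}$ to an informal remark after the proof rather than proving anything about them.
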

\begin{proof}
    Given a logging policy ranking $y_0$, a user defined metric weight $\omega$, and non-zero $r(d \mid q)$, for the choice of the clipping parameters $\epsilon_{-} = \epsilon_{+} = 1$, 
    the ranking $y^*(\epsilon_{-},\epsilon_{+})$ that maximizes the \ac{PRPO} objective (Eq.~\ref{eq:prpo_obj}) will be the same as the logging ranking $y_0$, i.e. $y^*(\epsilon_{-},\epsilon_{+})=y_0$.
    This is trivial to prove since any change in ranking can only lead to a decrease in the clipped ratio weights, and thus, a decrease in the \ac{PRPO} objective.
    Therefore, $y^*(\epsilon_{-}=1,\epsilon_{+}=1)=y_0$ when $\epsilon_{-} = \epsilon_{+} = 1$.
    Accordingly: $| U(y_0) -  U(y^*(\epsilon_{-}=1, \epsilon_{+}=1)) | = 0$ directly implies Eq.~\ref{eq:prpotheorem}.
    This completes our proof.
\end{proof}

\noindent%
Whilst the above proof is performed through the extreme case where $\epsilon_{-} = \epsilon_{+} = 1$ and the optimal ranking has the same utility as the logging policy ranking,
other choices of $\epsilon_{-}$ and $\epsilon_{+}$ bound the difference in utility to a lesser degree and allow for more deviation.
As our experimental results show, the power of PRPO is that it gives practitioners direct control over this maximum deviation.


\section{Gradient Ascent and Possible Extensions}
\label{sec:extensions}

Finally, we consider how the \ac{PRPO} objective should be optimized. This turns out to be very straightforward when we look at its gradient.
The clipping function $f$ (Eq.~\ref{eq:clipping_function}) has a simple gradient involving an indicator function on whether $x$ is inside the bounds:
\begin{equation}
\nabla_{x} f(x, \epsilon_{-}, \epsilon_{+}, r)
=  \mathds{1}\big[
(r > 0 \land x \leq \epsilon_{+})
\lor
(r < 0 \land x \geq \epsilon_{-})
\big] r.
\end{equation}
Applying the chain rule to the \ac{PRPO} objective (Eq.~\ref{eq:prpo_obj}) reveals:
\begin{equation*}
\nabla_{\!\pi} \hat{U}_{\text{PRPO}}(\pi) \!= 
	\!\!\sum_{q,d \in \mathcal{D}}
	\underbrace{\!\!
	\Big[\nabla_{\!\pi} \frac{\omega(d | q)}{\omega_{0}(d | q)} \Big]
	}_\text{\hspace{-.5cm}grad. for single doc.\hspace{-.5cm}}
	\underbrace{
	\nabla_{\!\pi} f\bigg( \! \frac{\omega(d | q)}{\omega_{0}(d | q)}, \epsilon_{-}, \epsilon_{+}, r(d | q) \! \bigg)
	}_\text{\raisebox{0mm}[1.58mm][0mm]{indicator reward function}}
	.
\end{equation*}
Thus, the gradient of \ac{PRPO} simply takes the importance weighted metric gradient per document, and multiplies it with the indicator function and reward.
As a result, \ac{PRPO} is simple to combine with existing \ac{LTR} algorithms, especially ones that use policy-gradients~\cite{williams1992simple}, such as PL-Rank~\citep{oosterhuis2021computationally, oosterhuis2022learning} or StochasticRank~\citep{ustimenko2020stochasticrank}.
For methods in the family of LambdaRank~\citep{wang2018lambdaloss, burges2010ranknet, burges2006learning}, it is a matter of replacing the $|\Delta DCG|$ term with an equivalent for the PRPO bounded metric.

Lastly, we note that whilst we introduced \ac{PRPO} for \ac{DR} estimation, it can be extended to virtually any relevance estimation by choosing a different $r$;
e.g., one can easily adapt it for relevance estimates from a click model~\citep{chuklin-click-2015}, etc.
In this sense, we argue that \ac{PRPO} can be seen as a framework for robust safety in \ac{LTR} in general.

\if0
\section{Results and Discussion}
\textbf{Comparision with baseline methods.}
Fig.~\ref{fig:mainresults} presents the main results with different \ac{CLTR} estimators with varying amounts of simulated click data.
Amongst the baselines, we see that the \ac{DR} estimator converges to the skyline much faster than the \ac{IPS} estimator. 
The \ac{IPS} estimator fails to reach the optimal performance even after training on $10^9$ clicks, suggesting that it suffers from a high-variance problem. 
This aligns with the findings in \citep{oosterhuis2022doubly}. 
As to safety, when the click data is limited ($N < 10^5$), the \ac{DR} estimator performs much worse than the logging policy, i.e., it exhibits unsafe behavior, which can lead to a negative user experience if deployed online. 
A likely explanation is that when click data is limited, the regression estimates ($\hat{R}(d)$, Eq.~\ref{cltr-obj-dr}) have high errors, resulting in a large performance degradation, compared to \ac{IPS}.

Our proposed safety methods, safe \ac{DR} and \ac{PRPO}, reach the performance of the logging policy within $\sim$500 queries on all datasets. 
For the safe \ac{DR} method, we set the confidence parameter $\delta=0.95$. For the \ac{PRPO} method, we set $\delta(N)=\frac{100}{N}$.  
On the MSLR and the ISTELLA dataset, we see that \ac{PRPO} reaches logging policy performance with almost $10^3$ fewer queries than the \ac{DR} method.  
Thus, our proposed methods, safe \ac{DR} and \ac{PRPO}, can be safely deployed, and avoid the initial period of bad performance of \ac{DR}, whilst providing the same state-of-the-art performance at convergence.


\header{Sensitivity analysis of the safety parameter}
To understand the tradeoff between safety and utility, we performed a sensitivity analysis by varying the safety parameter ($\delta$) for the safe \ac{DR} method and \ac{PRPO}. 
The top row of Fig.~\ref{fig:ablationresults} shows us the performance of the \ac{PRPO} method with different choices of the clipping parameter $\delta$ as a function of dataset size ($N$). 
We report results with the setting of the $\delta$ parameter, which results in different clipping widths. 
For the setting $\delta=\frac{0.01}{N}$ and $\delta = \frac{100}{N}$, the clipping range width grows linearly with the dataset size $N$. Hence, the resulting policy is safer at the start 
but converges to the \ac{DR} estimator when $N$ increases. 
With $\delta=\frac{0.01}{N}$, the clipping range is wider at the start. As a result, it is more unsafe than when $\delta=\frac{100}{N}$, which is the safest amongst all. 
For the case where the range grows logarithmically ($\delta=\frac{1}{\log(N)}$), the method is more conservative throughout, i.e., it is closer to the logging policy since the clipping window grows only logarithmically with $N$.
For the extreme case where the clipping range is a constant ($\delta=1$), \ac{PRPO} avoids any change w.r.t.\ the logging policy, and as a result, it sticks closely to the logging policy. 

The bottom row of Fig.~\ref{fig:ablationresults} shows the performance of the safe \ac{DR} method with varying confidence parameter values ($\delta$). 
Due to the nature of the generalization bound (Eq.~\ref{dr-objgenbound}), the confidence parameter is restricted to: $0 \leq \delta \leq 1$.
We vary the confidence parameters in the range $\delta \in \{0.01,0.1,0.45,0.95\}$.
We note that a lower $\delta$ value results in higher safety, and vice-versa. 
Until $N < 10^5$, there is no noticeable difference in performance. 
For the Yahoo!\ Webscope dataset, almost all settings result in a similar performance. 
For the MSLR and ISTELLA datasets, when $N < 10^5$, a lower $\delta$ value results in a more conservative policy, i.e., a policy closer to the logging policy. 
However, the performance difference with different setups is less drastic than with the \ac{PRPO} method. 
Thus, we note that the safe \ac{DR} method is \emph{less flexible} in comparison to \ac{PRPO}.

Therefore, compared to our safe \ac{DR} method, we conclude that our \ac{PRPO} method provides practitioners with greater flexibility and control when deciding between safety and utility. 

\header{Robustness analysis using an adversarial click model}
To verify our initial claim that our proposed \ac{PRPO} method provides safety guarantees \emph{unconditionally}, we report results with clicks simulated via the adversarial click model (Eq.~\ref{click_simulation_adv}). 
With the adversarial click setup, the initial user behavior assumptions (Assumption~\ref{assumption:trustbias}) \emph{do not hold}. 
The top row of Fig.~\ref{fig:ablationresults_adv} shows the performance of the \ac{PRPO} method with different safety parameters when applied to the data collected via the adversarial click model. 
We vary the $\delta$ parameter for \ac{PRPO} in the range $\{0.25,0.5,0.65,1.0\}$, e.g.,  $\delta=0.5$ results in $\epsilon_{-}=0.5$ and $\epsilon_{+} = 2$.
With the constant clipping range ($\delta=1$), we notice that after $\sim$400 queries, the \ac{PRPO} methods performance never drops below the safe logging policy performance. 
For greater values of $\delta$, there are drops in performance but they are all bounded. 
For the Yahoo! Webscope dataset, the maximum drop in the performance is $\sim$12$\%$; for the MSLR30K dataset, the maximum performance drop is $\sim$10$\%$; and finally, for the Istella dataset, the maximum drop is $\sim$20$\%$.
Clearly, these observations show that \ac{PRPO} provides robust safety guarantees, that are reliable even when user behavior assumptions are wrong.

In contrast, the generalization bound of our safe \ac{DR} method (Theorem~\ref{CLTR-bound}) holds only when the user behavior assumptions are true. 
This is not the case in the bottom row of Fig.~\ref{fig:ablationresults_adv}, which shows the performance of the safe \ac{DR} method under the adversarial click model. 
Even with the setting where the safety parameters have a high weight ($\delta=0.01$), as the click data size increases, the performance drops drastically. 
Regardless of the exact choice of $\delta$, the effect of the regularization of safe \ac{DR} disappears as $N$ grows, thus in this adversarial setting, it is only a matter of time before the performance of safe \ac{DR} degrades dramatically.

\setlength{\tabcolsep}{0.15em}
{\renewcommand{\arraystretch}{0.60}
\begin{figure*}[ht!]
\centering
\vspace{-0.5\baselineskip}
\begin{tabular}{c r r r r}
&
 \multicolumn{1}{c}{  Yahoo! Webscope}
&
 \multicolumn{1}{c}{  MSLR-WEB30k}
&
 \multicolumn{1}{c}{  Istella}
 &

\\
\rotatebox[origin=lt]{90}{\hspace{0.65cm} \small NDCG@5} &
\includegraphics[scale=0.455]{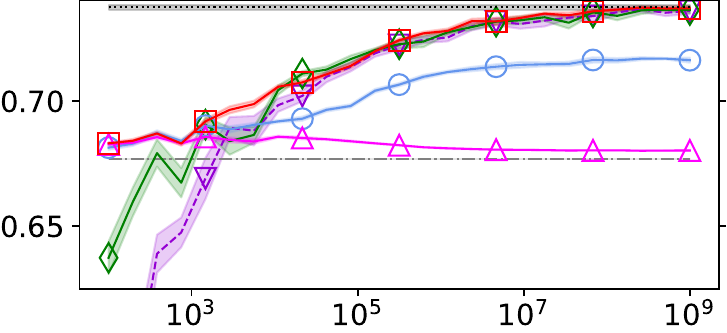} &
\includegraphics[scale=0.455]{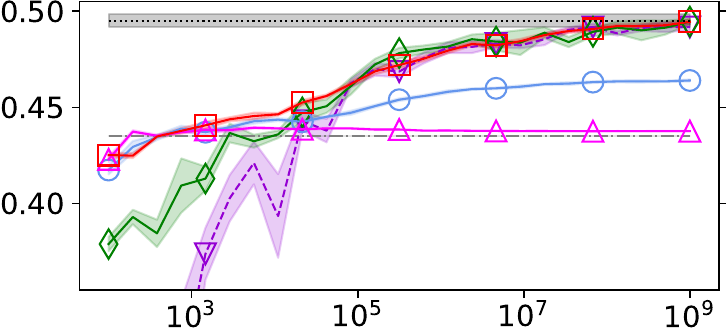} &
\includegraphics[scale=0.455]{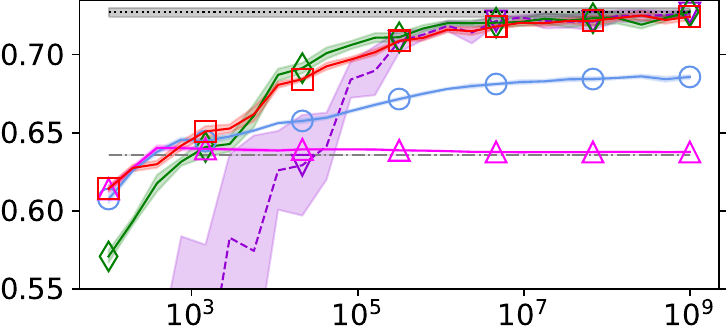} 
\\
\multicolumn{4}{c}{
\includegraphics[scale=.4]{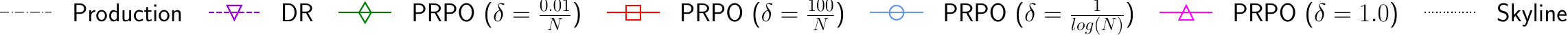}
}
\\
\rotatebox[origin=lt]{90}{\hspace{0.65cm} \small NDCG@5} &
\includegraphics[scale=0.455]{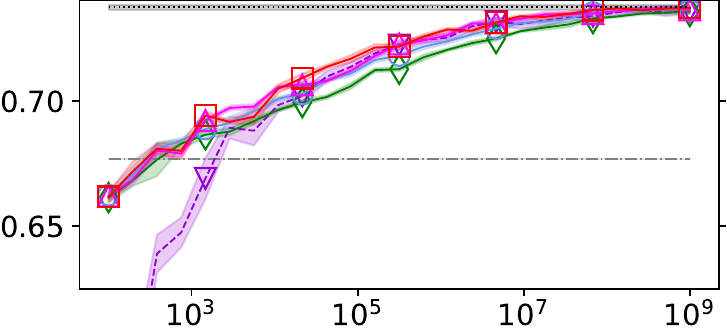} &
\includegraphics[scale=0.455]{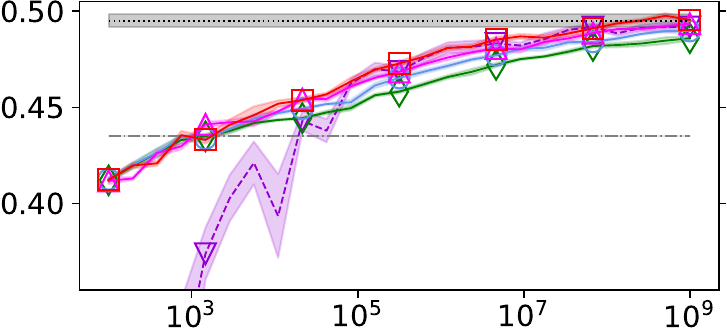} &
\includegraphics[scale=0.455]{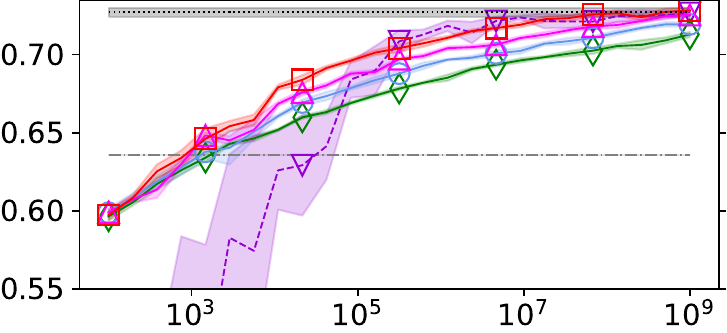}
\\
& \multicolumn{1}{c}{\small \hspace{1.75em} Number of interactions simulated ($N$)}
& \multicolumn{1}{c}{\small \hspace{1.75em} Number of interactions simulated ($N$)}
& \multicolumn{1}{c}{\small \hspace{1.75em} Number of interactions simulated ($N$)}
\\[2mm]
\multicolumn{4}{c}{
\includegraphics[scale=.4]{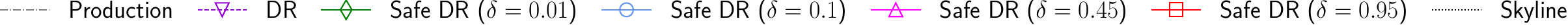}
}
\end{tabular}
\caption{
    Performance of the safe \ac{DR} and \ac{PRPO} with varying safety parameter ($\delta$). 
    Top row: sensitivity analysis of \ac{PRPO} with varying clipping parameter ($\delta$) over varying dataset sizes $N$. 
    Bottom row: sensitivity analysis for the safe \ac{DR} method with varying safety confidence parameter ($\delta$). Results are averaged over 10 runs; shaded areas indicate $80\%$ prediction intervals.
}
\label{fig:ablationresults}
\end{figure*}
}

\section{Conclusion}
In this paper, we have introduced the first safe \ac{CLTR} method that uses state-of-the-art \ac{DR} estimation and corrects trust bias.
This is a significant extension of the existing safety method for \ac{CLTR} that was restricted to position bias and \ac{IPS} estimation.
However, in spite of the importance of this extended safe \ac{CLTR} approach, it heavily relies on user behavior assumptions.
We argue that this means it only provides a \emph{conditional} concept of safety, that may not apply to real-world settings.
To address this limitation, we have made a second contribution: the \acfi{PRPO} method.
\ac{PRPO} is the first \ac{LTR} method that provides \emph{unconditional} safety, that is applicable regardless of user behavior.
It does so by removing incentives to stray too far away from a safe ranking policy.
Our experimental results show that even in the extreme case of adversarial user behavior  \ac{PRPO} results in safe ranking behavior, unlike existing safe \ac{CLTR} approaches.

\ac{PRPO} easily works with existing \ac{LTR} algorithms and relevance estimation techniques.
We believe it provides a flexible and generic framework that enables practitioners to apply the state-of-the-art \ac{CLTR} method with strong and robust safety guarantees. 
Future work may apply the proposed safety methods to exposure-based ranking fairness~\cite{oosterhuis2021computationally,yadav2021policy} and to safe online \ac{LTR}~\cite{oosterhuis2021unifying}.

\fi

\end{document}